\newtheorem{theorem}{Theorem}
\newtheorem{lemma}[theorem]{Lemma}
\newcommand{\ceil}[1]{\lceil #1 \rceil}
\def\BibTeX{{\rm B\kern-.05em{\sc i\kern-.025em b}\kern-.08em
    T\kern-.1667em\lower.7ex\hbox{E}\kern-.125emX}}
\begin{document}


\title{Parallel Actors and Learners: A Framework for Generating Scalable RL Implementations*\\
\thanks{This work has been sponsored by the U.S. Army Research Office (ARO) under award number W911NF1910362 and the U.S. National Science Foundation (NSF) under award numbers 2009057.}
}

\author{\IEEEauthorblockN{Chi Zhang}
\IEEEauthorblockA{\textit{Department of Computer Science} \\
\textit{University of Southern California}\\
Los Angeles, USA \\
zhan527@usc.edu}
\and
\IEEEauthorblockN{Sanmukh Rao Kuppannagari, Viktor K Prasanna}
\IEEEauthorblockA{\textit{Department of Electrical and Computer Engineering} \\
\textit{University of Southern California}\\
Los Angeles, USA \\
kuppanna@usc.edu, prasanna@usc.edu}
}

\maketitle

\begin{abstract}
Reinforcement Learning (RL) has achieved significant success in application domains such as robotics, games and health care. However, training RL agents is very time consuming. Current implementations exhibit poor performance due to challenges such as irregular memory accesses and thread-level synchronization overheads on CPU.
In this work, we propose a framework for generating scalable reinforcement learning implementations on multi-core systems. Replay Buffer is a key component of RL algorithms which facilitates storage of samples obtained from environmental interactions and data sampling for the learning process. We define a new data structure for Prioritized Replay Buffer based on $K$-ary sum tree that supports asynchronous parallel insertions, sampling, and priority updates. To address the challenge of irregular memory accesses, we propose a novel data layout to store the nodes of the sum tree that reduces the number of cache misses. Additionally, we propose \textit{lazy writing} mechanism to reduce thread-level synchronization overheads of the Replay Buffer operations. Our framework employs parallel actors to concurrently collect data via environmental interactions, and parallel learners to perform stochastic gradient descent using the collected data. Our framework supports a wide range of reinforcement learning algorithms including DQN, DDPG, etc. We demonstrate the effectiveness of our framework in accelerating RL algorithms by performing experiments on CPU + GPU platform using OpenAI benchmarks. 
Our results show that the performance of our $K$-ary sum tree based Prioritized Replay Buffer improves the baseline implementations by around 4x$\sim$100x. Our proposed synchronization optimizations improve the performance by around 2x$\sim$4.4x compared with using a global lock. 
By plugging our Replay Buffer implementation into existing open source reinforcement learning frameworks, we achieve 1.19x$\sim$1.75x speedup for various algorithms.

\end{abstract}

\begin{IEEEkeywords}
parallel reinforcement learning, prioritized replay buffer, parameter server
\end{IEEEkeywords}

\section{Introduction}
Reinforcement Learning (RL) has shown great success in a wide range of applications including board games \cite{alphago}, strategy games \cite{alphastarblog}, energy systems \cite{chi_buildsys19}, robotics \cite{rl_robots_nn}, recommendation systems \cite{recommendation_rl}, hyperparameter selection \cite{effective_online_hyperparameter} etc. Typically, RL algorithms train by iteratively collecting the data by interacting with a simulator of the environment, and learning a model using the collected data. However, it takes a considerable amount of time to train a reinforcement learning agent to converge. This is because: 1) the speed of data collection is limited by the complexity of the environment simulator which needs to accurately represent the real world physical system; 2) the large state space needed to represent a typical real-world physical system makes it necessary to gather a large amount of data to successfully train a RL agent.  We show the training time versus the size of the state space of three popular environments used in RL training in Figure~\ref{fig:time_vs_state_space}. On Mujoco~\cite{mujoco}, which is a physics engine to simulate robotics, biomechanics, etc., it takes around 3 hours to train an agent using Pytorch \cite{pytorch} on a 4-core machine with a GTX 1060 GPU. On Atari~\cite{openai_gym}, which is a game simulator, it takes around 12 hours to train on the same machine. The state-of-the-art RL algorithm for playing Go --- AlphaGo Zero~\cite{alphago_zero} was trained on 4 TPUs \cite{tpu} for 21 days. Thus, developing faster reinforcement learning algorithms is an important research direction.



Prior work tackles this problem by deploying parallel actors that can collect data simultaneously \cite{gorila, apex, a3c, impala}. \cite{gorila} introduces a parallel framework for Deep Q Network (DQN) \cite{dqn}. 
It accelerates the training by using independent actors collecting data asynchronously. The data is stored in a shared replay buffer. 
Meanwhile, parallel learners sample data uniformly from the replay buffer and compute the gradients. 
The gradients are sent to the central parameter server \cite{parameter_server} for neural network weights update.
\cite{apex} improves the performance of \cite{gorila} by using Prioritized Replay Buffer so that important data is sampled with higher weights to accelerate the training.

In these works, Replay Buffer management becomes a limiting factor in achieving high scalability when increasing parallelism. Improving the performance of parallel Replay Buffer management via techniques such as careful data structure design or low overhead thread-level synchronization has not received much attention. In this work, we optimize the implementation of Replay Buffer management and propose a framework for generating scalable reinforcement learning implementations. The generated RL implementations are composed of parallel actors and learners executing on computing platforms such as CPUs, GPUs, or FPGAs with Replay Buffer management executing on a CPU platform. We illustrate our framework by generating RL algorithms targeting a multi-core platform. Our key contributions are summarized as follows:
\begin{itemize}
    \item We propose a new data structure for the Prioritized Replay Buffer based on $K$-ary sum tree that supports asynchronous parallel insertions, sampling and priority update.
    \item We propose a novel data layout to store the nodes of the sum tree to minimize the number of cache misses.
    \item We propose \textit{lazy writing} mechanism to minimize the thread-level synchronization overhead of various operations of the Replay Buffer.
    \item Given a hardware configuration, our framework automatically decides the number of actors and learners such that the desired ratio between the throughput of the data collection and the throughput of the learning is achieved.
    \item Our framework supports a wide range of reinforcement learning algorithms including DQN \cite{dqn}, DDQN \cite{double_q_learning}, DDPG \cite{ddpg}, TD3 \cite{td3}, SAC \cite{sac}, etc.
    \item We demonstrate the effectiveness of our framework in accelerating RL algorithms by performing experiments on CPU + GPU platforms using OpenAI \cite{openai_gym} benchmarks. Our results show that the performance of our $K$-ary sum tree based Prioritized Replay Buffer improves the baseline implementations by around 4x$\sim$100x. Our proposed synchronization optimizations improve the performance by around 2x$\sim$4.4x compared with using a global lock. By plugging our Replay Buffer implementation into existing open source reinforcement learning frameworks, we achieve 1.19x$\sim$1.75x speedup for various algorithms.
\end{itemize}



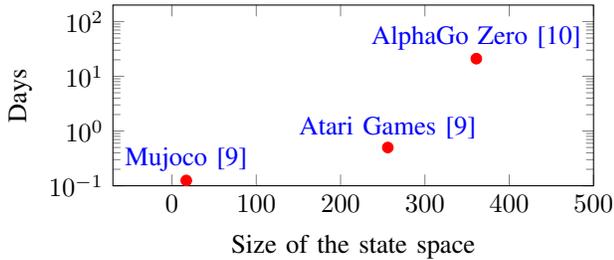
\begin{figure}
    \centering
    \begin{tikzpicture}
        \begin{axis}[
            ymode=log,
            xmin = -70, xmax = 500,
            ymin = 0.1, ymax = 200,
            xlabel={Size of the state space},
            ylabel={Days},
            width = 0.9\linewidth,
            height = 0.45\linewidth,
            nodes near coords=\pgfplotspointmeta,
            point meta=explicit symbolic
        ]
        \addplot[scatter,only marks,mark options={scale=1,fill=red,draw=red},color=blue] table [meta index=2] {
        4 0.01 CartPole
        17 0.125 Mujoco\cite{openai_gym}
        256 0.5 Atari\ Games\cite{openai_gym}
        361 21 AlphaGo\ Zero\cite{alphago_zero}
        };  
        \end{axis}
    \end{tikzpicture}
    \caption{Training time of various environments versus the size of the state space}
    \label{fig:time_vs_state_space}
\end{figure}

\section{Background}
\subsection{Markov Decision Process}\label{sec:mdp}
Reinforcement learning algorithms aim to solve Markov Decision Process (MDP) with unknown dynamics. A Markov Decision Process \cite{rl_intro} is also referred as \textit{world} or \textit{environment} in this context. An environment has five key components as follows:
\begin{itemize}
    \item State space $\mathcal{S}$: the set of all possible states in an environment. For example, in the Go game, the state space is all the possible positions of the stones.
    \item Action space $\mathcal{A}$: the set of all possible actions. For example, in the Go game, the action space is all the possible moves in the current state.
    \item System dynamics $\mathcal{P}$: the function that computes the next state given the current state and the action. 
    \item Reward function $\mathcal{R}$: the intermediate reward received by the agent when transiting from the current state to the next state.
    \item Initial state distribution $\mu$: the distribution of states where the agents will be initially at.
\end{itemize}
We define an episode as one trajectory of the agent acting from the initial state to the terminal state.
The policy $\pi(a|s)$ is defined as a stationary function that maps from the state space to the action space. The objective of reinforcement learning is to learn the policy such that the expected long-term accumulated rewards in an episode is maximized. 
\\\textbf{High level abstractions and APIs}
Reinforcement learning improves the performance of the agent by learning from the data collected from interacting with the environment. To facilitate the understanding from a system level, we introduce the high level APIs inspired from the ones used in OpenAI gym \cite{openai_gym} in Python programming language \cite{python}:
\begin{itemize}
    \item \mintinline{python}{def reset() -> S}: return a state by sampling from the initial state distribution $\mu$.
    \item \mintinline{python}{def step(a: A) -> (S, float, bool)}: return a tuple of state, reward (float type) and done signal (bool type) by taking action \textit{a}. The done signal indicates whether the current episode is finished. If the current episode is finished, call the reset to restart the episode. The environment class maintains its own internal state.
    \item \mintinline{python}{def act(s: S) -> A}: the acting function of the agent that takes the current state and outputs the action.
    \item \mintinline{python}{def learn(data: Data)}: the learning function of the agent that takes the data and updates its internal weights to improve the performance. The standard Data type contains a tuple consisting of a transition (state ($s$), action ($a$), next\_state ($s'$), reward ($r$)).
\end{itemize}

\subsection{Reinforcement Learning}
In reinforcement learning, a Replay Buffer is employed \cite{prioritized_experience_replay} to store all the data collected from the start of the training. The agent is updated using data sampled from the Replay Buffer. We show a generic paradigm of reinforcement learning algorithms in Figure~\ref{fig:generic_off_policy_rl}. 
Typical reinforcement learning algorithms include DQN \cite{dqn}, DDQN \cite{double_q_learning}, DDPG \cite{ddpg}, TD3 \cite{td3}, SAC \cite{sac}, etc. 
These algorithms only differ in how the learning is performed while the training loop is the same.

\subsection{Prioritized Replay Buffer}
To illustrate the motivation of using a Prioritized Replay Buffer \cite{prioritized_experience_replay}, we start by examining how the learning is performed in Deep Q Network (DQN) \cite{dqn}. DQN trains a Q network parameterized by $\psi$ by minimizing the following objective:
\begin{align}
    \min_{\psi}\frac{1}{N}\sum_{i=1}^{N} (Q_{\psi}(s_i,a_i) - (r_i + \gamma \max_{a_i'}Q_{\psi}(s_i',a_i')))^2
\end{align}
where $Q_{\psi}(s,a) - (r + \gamma \max_{a'}Q_{\psi}(s',a'))$ is the temporal difference (TD) error. Uniform sampling from the replay buffer to update the TD error is less effective because the sampled data may already have low TD error. Prioritized Replay Buffer \cite{prioritized_experience_replay} is proposed to mitigate this problem by assigning a priority to each data item using the absolute value of the TD error:
\begin{align}
    P(i) = |Q_{\psi}(s_i,a_i) - (r_i + \gamma \max_{a_i'}Q_{\psi}(s_i',a_i'))|
\end{align}
where $P(i)$ denotes the priority of data $i$.
Then, the data is sampled according to the probability proportional to the priority. To fix the bias introduced by the prioritized sampling, importance weights are computed as $w(i) = (\frac{1}{N}\cdot\frac{\sum_{i}P(i)}{P(i)})^{\beta}$, where $w(i)$ denotes the importance weights for data $i$ and $\beta$ is a hyper-parameter. The learning step of DQN using a Prioritized Replay Buffer is:
\begin{align}
    \min_{\psi} \frac{1}{N}\sum_{i=1}^{N} w(i)\cdot(Q_{\psi}(s_i,a_i) - (r_i + \gamma \max_{a_i'}Q_{\psi}(s_i',a_i')))^2
\end{align}
After each update, the new priority is stored in the Replay Buffer. A complete training process is shown in Algorithm~\ref{alg:off_policy_rl}. Other algorithms follow the same structure and only differ slightly in the technique used to update the Q function.

\begin{figure}
    \centering
    \includegraphics[width=\linewidth]{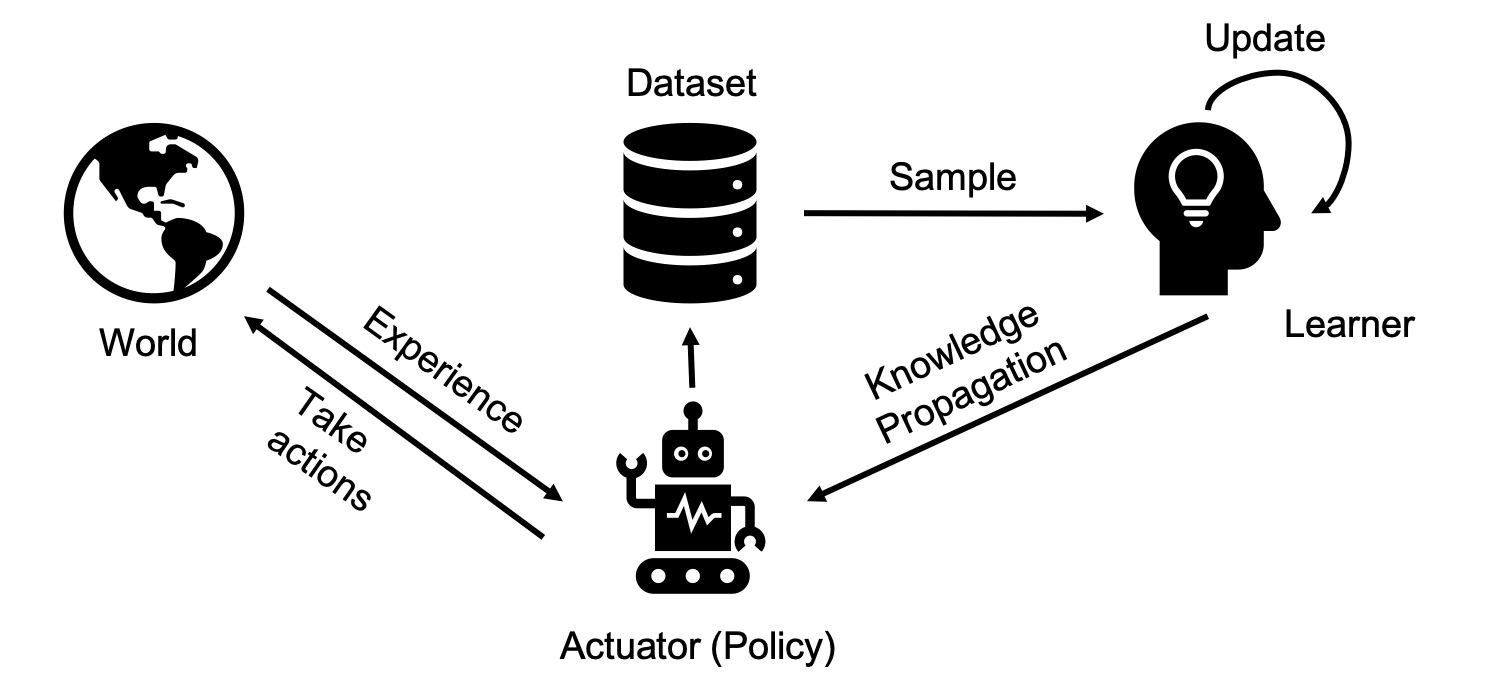}
    \caption{A generic paradigm of reinforcement learning algorithms}
    \label{fig:generic_off_policy_rl}
\end{figure}

\begin{algorithm}[!t]
    \caption{Generic Reinforcement Learning}
    \label{alg:off_policy_rl}
    \begin{algorithmic}[1]
        \State \textbf{Input:} Environment \textit{env}, Agent $\pi_{\theta}$, Replay buffer $\mathcal{B}$.
        \State \textbf{Output:} Trained agent $\pi_{\theta}$.
        \For{$i$ = 1; $i$ $\leq$ iterations; $i++$}
            \If{done}
                \State obs = \textit{env}.reset(); \Comment{Episode terminates}
            \EndIf
            \State action = agent.act(obs); \Comment{Agent select action}
            \State next\_obs, reward, done = \textit{env}.step(action); \Comment{Actuator}
            \State $\mathcal{B}$.insert(obs, action, next\_obs, done);
            \State obs = next\_obs;
            \If{$i$ \% update\_interval == 0}
                \State index, data = $\mathcal{B}$.sample(batch\_size);
                \State priority = $\mathcal{B}$.get\_priority(index);
                \For{$i$ in index}
                    \State $w(i) = (\frac{1}{N}\cdot\frac{\sum_{i}P(i)}{P(i)})^{\beta}$; \Comment{importance weights}
                \EndFor
                \State new\_priority = agent.learn(data, is);
                \State $\mathcal{B}$.update\_priority(index, new\_priority);
            \EndIf
        \EndFor
    \end{algorithmic}
\end{algorithm}

\section{Related Work}
\subsection{Parallel and Distributed Reinforcement Learning}
Existing works that aim to improve the execution time of Reinforcement Learning (RL) algorithms focus on increasing the parallelism by increasing the number of actors and learners. GORILA \cite{gorila} proposes the first parallel architecture of DQN \cite{dqn} to play Atari games \cite{openai_gym}. 
They employ independent actors and learners in parallel with a global parameter server. 
Our method follows the general architecture of GORILA \cite{gorila} at a high level and proposes detailed data structures and thread-level synchronization mechanism to maximize the scalable performance. A3C \cite{a3c} uses asynchronous actors to collect the data and update the agent using actor critic algorithms without using a Replay Buffer. Due to synchronization overhead, A3C doesn't scale very well. IMPALA \cite{impala} relaxes the synchronization overhead of A3C by using importance sampling. RLlib \cite{ray_rllib} proposes abstractions for distributed reinforcement learning for software developers built on top of the Ray library \cite{ray_rllib} written in Python \cite{python}. PAAC \cite{paac} proposes parallel advantaged actor critic. They synchronize the actors after every environmental step. This significantly slows down the entire system. In contrast, our actors act independently in parallel.
\cite{map_reduce_parallel_rl} proposes parallel reinforcement learning using popular MapReduce \cite{map_reduce} framework with linear function approximation.
\cite{parallel_rl} proposes to use parallel actors to learn in tabular MDP while our method can tackle general continuous space MDP with neural network policies. 

A key bottleneck in these works is the management of Replay Buffer. Thread-level synchronization overheads and irregular memory accesses while accessing the Replay Buffer lead to poor scalability when parallelism is increased by adding more hardware resources. Ape-X \cite{apex} proposes distributed Prioritized Replay Buffer with parallel actors and a single learner to accelerate reinforcement learning algorithms on large scale clusters. However, to the best of our knowledge, our approach is the first to explicitly focus on improving the efficiency of Replay Buffer management on multi-core platforms by developing a novel data structure and low overhead thread-level synchronization mechanisms to enable high throughput parallel Replay Buffer management.

In addition to these works, specialized hardware designs to accelerate reinforcement learning have also emerged recently. \cite{trpo_fpga} proposes customized Pearlmutter Propagation on FPGAs to accelerate conjugate gradient method used in TRPO \cite{trpo}. \cite{ppo_fpga} proposes a systolic-array based architecture on FPGAs to accelerate PPO \cite{ppo}. However, these works do not require the use of Replay Buffer.

\subsection{Parallel Stochastic Gradient Descent}
We also review techniques for performing parallel stochastic gradient descent as it is used in our learner implementation. \cite{parameter_server} proposes parameter server to facilitate parallel stochastic gradient descent. Each worker samples a batch of data, computes the gradients and send them to the central parameter server. The parameter server aggregates the gradients and performs the update. The workers then pull the updated weights from the parameter server. \cite{AsyncPSGD} proposes asynchronous stochastic gradient descent to reduce the negative impact of asynchrony with general convergence time bounds. For simplicity, we adopt the parameter server \cite{parameter_server} framework and leave advanced asynchronous methods for future work.

\begin{algorithm}[!t]
    \caption{Key operations of the N-ary sum tree.}
    \label{alg:n_nary_sum_tree_func}
    \begin{algorithmic}[1]
      \Function{updateValue}{idx, value}
          \State node\_idx = \Call{convertToNodeIdx}{idx};
          \State $\Delta$ = value - \Call{getValue}{node\_idx};
          \While{!\Call{isRoot}{node\_idx}}
              \State new\_value = \Call{getValue}{node\_idx} + $\Delta$;
              \State \Call{SetValue}{node\_idx, new\_value};
              \State node\_idx = \Call{getParent}{node\_idx};
          \EndWhile
      \EndFunction
      \\
      \Function{getPrefixSumIdx}{prefixSum}
          \State node\_idx = \Call{getRoot}{\null};
          \While{!isLeaf(node\_idx)}
              \State node\_idx = \Call{getLeftChild}{node\_idx};
              \State partialSum = 0;
              \For{$i$ = 0; $i$ $<$ fan\_out; $i++$}
                  \State sum = partialSum + \Call{getValue}{node\_idx};
                  \If{sum $\geq$ prefixSum}
                      \State break;
                  \EndIf
                  \State partialSum = sum;
                  \State node\_idx = \Call{getNextSibling}{node\_idx};
              \EndFor
              \State prefixSum = prefixSum - partialSum;
          \EndWhile
          \State idx = \Call{convertToDataIdx}{node\_idx};
          \State \Return idx;
      \EndFunction
    \end{algorithmic}
\end{algorithm}

\section{Parallel Prioritized Replay Buffer}\label{sec:parallel_per}
In this section, we discuss in detail the design of our Prioritized Replay Buffer that supports parallel actors and learners. We start by introducing the key operations that need to be supported.
\subsection{Operations}\label{sec:parallel_per_ops}
\subsubsection{Insertion}
Given a new data item $x$, find the next available index $i$ and insert $x$ at $i$. If the Replay Buffer is full, find the index using the eviction policy. Set the priority at index $i$ to $P(i)=P_{\max}$, where $P(i)$ is the priority at index $i$ and $P_{\max}$ is the maximum priority in the Replay Buffer. The most common eviction policy used in existing implementations is First-in-first-out (FIFO).
\subsubsection{Sampling}
Sample a data item $x_i$ according to the probability distribution $Pr(i)=\nicefrac{P(i)}{\sum_{i}P(i)}, i=1,2,\ldots, N$, where $N$ is the size of the Replay Buffer. To do so, we first sample $x$ from uniform distribution $U(0,1)$. Then, we compute the cumulative density function (cdf) as $cdf(i)=\sum_{j=1}^{i}Pr(j), i=1,2,\ldots, N$. Finally, the sampled index $i=cdf^{-1}(x)$. Mathematically, this is equivalent to finding the minimum index $i$, such that the \textbf{prefix sum} of the probability from 0 to $i$ is greater than or equal to $x$:
\begin{align}
    \min_{i}\sum_{j=1}^{i}Pr(i)\geq x \Rightarrow \min_{i}\sum_{j=1}^{i}P(i)\geq x\cdot \sum_{j=1}^{N}P(j)
    \label{eq:prefix_sum}
\end{align}
\subsubsection{Priority retrieval}
Return the priority at index $i$.
\subsubsection{Priority update}
Update the priority at index $i$.

\subsection{Frequency of the Operations and Runtime Requirements} 
As shown in Algorithm~\ref{alg:off_policy_rl}, the insertion is executed once per iteration. The sampling, priority retrieval and priority update are executed once every update\_interval. Directly storing the priority in an array incurs a runtime complexity of $\Theta(N)$ for sampling and $\Theta(1)$ for priority retrieval and priority update. Directly storing the prefix sum in an array incurs a runtime of $\Theta(\log N)$ in sampling, $\Theta(1)$ in priority retrieval and $\Theta(N)$ in priority update. Based on the frequency of the operations, both these implementations incur a overall runtime complexity of $\Theta(N)$. In this paper, we proposed to use $K$-ary sum tree to implement the Prioritized Replay Buffer to achieve $\Theta(\log N)$ runtime complexity for both sampling and priority update and thus for the entire implementation.

\subsection{$K$-ary Sum Tree}\label{sec:sum_tree}
We show an example of a $K$-ary sum tree for $K=4$ in Figure~\ref{fig:sum_tree}. Each node has $K$ child nodes. The value stored in the parent node is the sum of all the values stored in the child nodes. The leaf nodes hold the actual priorities. 

\begin{figure}
    \centering
    \includegraphics[width=\linewidth]{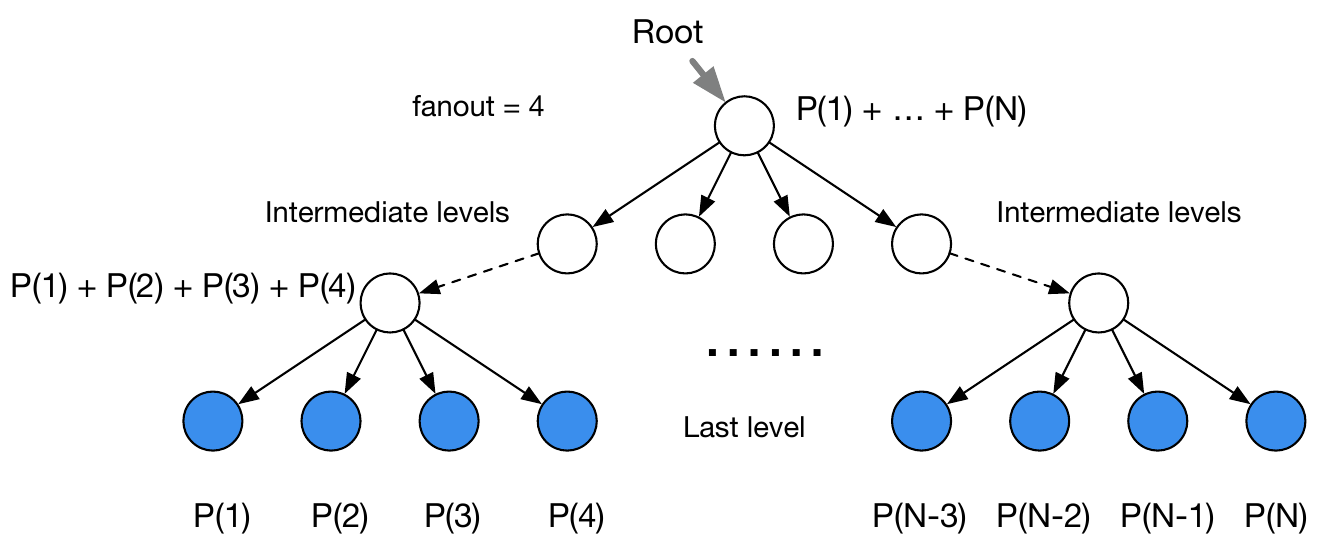}
    \caption{The overall structure of a 4-ary sum tree.}
    \label{fig:sum_tree}
\end{figure}

\begin{figure}
    \centering
    \includegraphics[width=\linewidth]{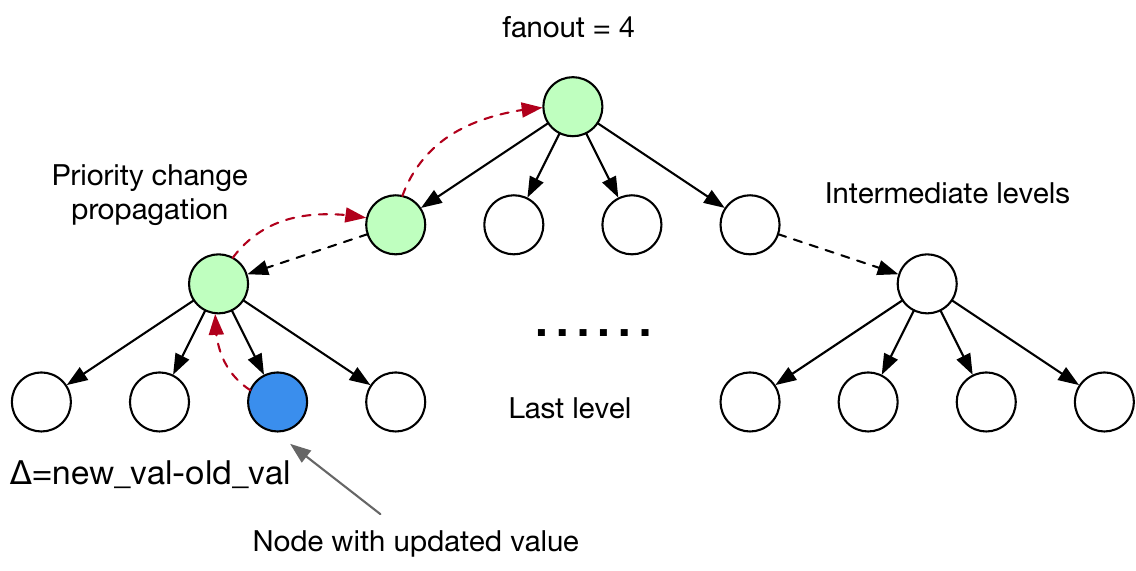}
    \caption{Illustration of the process when updating the value in the $K$-ary sum tree with fanout=4 as shown in Algorithm~\ref{alg:n_nary_sum_tree_func}. The blue node denotes the leaf node that holds the priority. The green nodes denote the intermediate sums that are updated by propagating the change of the priority from the leaf to the root. The red dotted arrow shows the direction of the value propagation.}
    \label{fig:update_value}
\end{figure}

\begin{figure*}
    \centering
    \includegraphics[width=\linewidth]{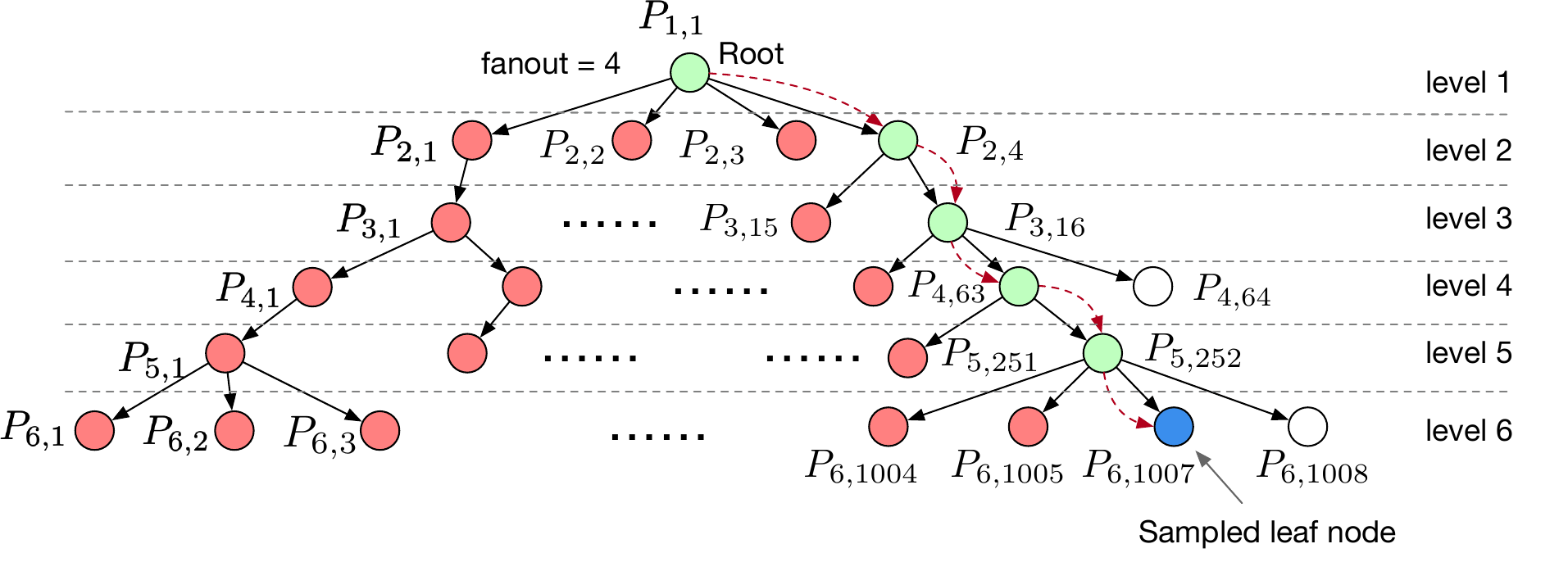}
    \caption{Illustration of the process when sampling index according to the priority in the $K$-ary sum tree with fanout=4 as shown in Algorithm~\ref{alg:n_nary_sum_tree_func}. Starting from the root node, the green nodes denote the cutoff node during traversal and the blue node denotes the leaf node sampled. The red dotted arrow shows the direction of the tree traversal.}
    \label{fig:get_prefix_sum}
\end{figure*}

\subsubsection{Priority retrieval}
In order to obtain the priority for the index $i$, we create an array of pointers, each pointing to its corresponding leaf node that holds the priority value. Thus, priority retrieval using $K$-ary sum tree requires $\Theta(1)$ time.

\subsubsection{Priority update}
To update the priority of index $i$, we first obtain the leaf node holding the priority. We compute the change of the priority by subtracting the old value from the new value. Then, we propagate the change of the priority from the leaf node to the root node by traversing along the parent nodes. We show a detailed function in Algorithm~\ref{alg:n_nary_sum_tree_func} and an example in Figure~\ref{fig:update_value}. It is easy to see that this operation requires $\Theta(\log_K N)$ time.

\subsubsection{Prefix sum index computation}
Given a randomly sampled number $x\sim U(0,1)$, the objective is to compute index $i=\min_{i}\sum_{j=1}^{i}P(i)\geq x\cdot \sum_{j=1}^{N}P(j)$ as discussed in Section~\ref{sec:parallel_per_ops}. The sum of all the priorities in the Replay Buffer $\sum_{j=1}^{N}P(j)$ can be computed in $\Theta(1)$ by simply retrieving the value stored in the root node. To design an algorithm that obtains the target index, we start by proving Lemma~\ref{lemma:index_existence} and theorem~\ref{the:sum_tree_prefix}:

\begin{lemma}
    \label{lemma:index_existence}
    Let the value of the $i$-$th$ node at level $m$ be $P_{i,m}$. Assume the height of the tree is $H$. Then, at level $1\leq m\leq H$, there exists index $j$, $1 \leq j \leq K^{m-1}$, such that $\sum_{i=1}^{j}P_{m, i}\geq x\cdot \sum_{i=1}^{N}P(i)$, for any $x\in (0, 1)$.
\end{lemma}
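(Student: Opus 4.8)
The plan is to reduce the claim to a single structural invariant of the $K$-ary sum tree: on any fixed level, the node values sum to the global total $S := \sum_{i=1}^{N} P(i)$. Once this invariant is available, the existence statement follows immediately from the monotonicity of prefix sums together with the hypothesis $x \in (0,1)$.

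First I would establish the level-sum invariant. By the defining property of the sum tree, the value $P_{m,i}$ stored at the $i$-th node of level $m$ equals the sum of the leaf priorities lying in that node's subtree. Since the subtrees rooted at the $K^{m-1}$ nodes of level $m$ partition the leaves of the tree (padding absent leaves with priority $0$ when $N$ is not a power of $K$), summing $P_{m,i}$ over all $i$ from $1$ to $K^{m-1}$ counts each leaf priority exactly once. Hence $\sum_{i=1}^{K^{m-1}} P_{m,i} = \sum_{i=1}^{N} P(i) = S$ for every level $m$ with $1 \leq m \leq H$. I would phrase this as a short downward induction on $m$: the base case $m=1$ is the root holding $S$, and the inductive step replaces each level-$m$ node by the sum of its $K$ children, which by the parent-equals-sum-of-children rule leaves the level total unchanged.

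Next I would invoke non-negativity of the priorities to note that the prefix sums $S_j := \sum_{i=1}^{j} P_{m,i}$ are non-decreasing in $j$, with maximal value $S_{K^{m-1}} = S$ by the invariant. For any $x \in (0,1)$ we have $x \cdot S \leq S$, so the index $j = K^{m-1}$ already satisfies $\sum_{i=1}^{j} P_{m,i} = S \geq x \cdot S$, establishing existence. The degenerate case $S = 0$ is immediate, since then both sides vanish.

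I do not expect a genuine obstacle here: the content of the lemma is carried entirely by the level-sum invariant, and the remainder is a one-line monotonicity observation. The only points demanding care are the bookkeeping when the tree is not perfectly full, which is handled by zero-padding the missing leaves so that the partition argument still applies, and the tacit assumption that all priorities are non-negative. The latter holds in this setting, since each inserted priority is initialized to $P_{\max} \geq 0$ and subsequent updates store non-negative TD-based magnitudes. It is worth remarking that this existence guarantee is precisely what certifies correctness of the per-level search inside \textsc{getPrefixSumIdx} in Algorithm~\ref{alg:n_nary_sum_tree_func}: at each level the inner loop is assured to find a cutoff node at which the running prefix sum first reaches the target $x \cdot S$.
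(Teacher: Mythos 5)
Your proposal is correct and follows essentially the same argument as the paper: both rest on the level-sum invariant (parent equals sum of children, so every level sums to $\sum_{i=1}^{N}P(i)$) combined with non-negativity of priorities and $x<1$, so that the full prefix sum at any level already meets the target. The only differences are cosmetic --- you prove the invariant top-down from the root while the paper propagates it up from the leaf level, and you additionally spell out the zero-padding and $S=0$ edge cases that the paper leaves implicit.
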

\begin{proof}
    According to the definition, the leaf node holds the priority value. Thus, $P_{i, H}=P(i), \forall i=1,2,\ldots, K^{H-1}$. Since $x\in (0, 1)$, we obtain $x\cdot \sum_{i=1}^{N}P(i)\leq \sum_{i=1}^{N}P(i) \leq \sum_{i=1}^{K^{H-1}}P_{i,H}$. Because the priority values are non-negative, there must exist index $j$, $1 \leq j \leq K^{H-1}$, such that $\sum_{i=1}^{j}P_{H, i}\geq x\cdot \sum_{i=1}^{N}P(i)$. According to the property of the sum tree, the value of the parent is the sum of all its children. Thus, $\sum_{i=1}^{K^{H-1}}P_{H, i}=\sum_{i=1}^{K^{m-1}}P_{m, i}, \forall m=1, 2, \cdots, H-1$. Therefore, the same argument holds for each level. This concludes the proof for Lemma~\ref{lemma:index_existence}.
\end{proof}
\begin{theorem}
    \label{the:sum_tree_prefix}
    Let $j_m=\min_{j'} \sum_{i=1}^{j'}P_{m, i}\geq x\cdot \sum_{i=1}^{N}P(i)$. Then, $j_m$ is the parent node of $j_{m+1}$, $\forall m=1,2,\cdots H-1$.
\end{theorem}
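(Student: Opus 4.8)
The plan is to translate the claim ``$j_m$ is the parent of $j_{m+1}$'' into the index inequality $(j_m-1)K+1 \le j_{m+1}\le j_m K$, since under the standard array layout the $K$ children of node $j_m$ at level $m$ occupy exactly positions $(j_m-1)K+1,\dots,j_m K$ at level $m+1$. Proving these two bounds is what the whole argument reduces to, and it is equivalent to showing $j_m=\lceil j_{m+1}/K\rceil$.

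First I would establish the key cross-level identity $\sum_{i=1}^{jK}P_{m+1,i}=\sum_{i=1}^{j}P_{m,i}$ for every $j$. This follows directly from the defining property of the sum tree, namely that each node's value equals the sum of its $K$ children, together with the fact that the descendants of the first $j$ nodes at level $m$ are precisely the first $jK$ nodes at level $m+1$. Summing the ``parent equals sum of children'' relation over the first $j$ parents gives the identity. Writing $T=x\cdot\sum_{i=1}^{N}P(i)$, which is strictly positive for $x\in(0,1)$, I will then combine this identity with the minimality that defines $j_m$.

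Next, by definition $\sum_{i=1}^{j_m}P_{m,i}\ge T$, so the identity gives $\sum_{i=1}^{j_m K}P_{m+1,i}\ge T$; since $j_{m+1}$ is the smallest level-$(m+1)$ index whose prefix sum reaches $T$, this yields $j_{m+1}\le j_m K$. Symmetrically, minimality of $j_m$ gives $\sum_{i=1}^{j_m-1}P_{m,i}<T$ (with the convention that the empty sum is $0<T$, covering the case $j_m=1$), so the identity gives $\sum_{i=1}^{(j_m-1)K}P_{m+1,i}<T$, which forces $j_{m+1}>(j_m-1)K$, i.e. $j_{m+1}\ge(j_m-1)K+1$. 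Combining the two bounds yields $(j_m-1)K+1\le j_{m+1}\le j_m K$, so $j_{m+1}$ is one of the $K$ children of $j_m$, equivalently $j_m=\lceil j_{m+1}/K\rceil$. Because this argument is purely per-level, it holds for every $m=1,\dots,H-1$ with no induction on $m$ required.

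I expect the main obstacle to be nailing down the cross-level identity rigorously rather than the inequalities themselves: it hinges on the precise node-to-index convention, that the children of a contiguous prefix of parents form a contiguous prefix of children, which must be made explicit for the summation to be airtight. A secondary subtlety is that both inequalities rely on the minimality of $j_m$ and must remain valid when some priorities are zero; I would therefore phrase the bounds directly in terms of ``the smallest index whose prefix sum reaches $T$,'' avoiding any appeal to strict monotonicity of the prefix sums.
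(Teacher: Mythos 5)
Your proposal is correct and follows essentially the same route as the paper's own proof: both translate the parent relation into the index bounds $(j_m-1)K+1 \le j_{m+1} \le j_m K$ and derive each bound from the cross-level prefix-sum identity $\sum_{i=1}^{j}P_{m,i}=\sum_{i=1}^{jK}P_{m+1,i}$ combined with the minimality of $j_m$ and $j_{m+1}$. Your version is in fact slightly more careful than the paper's, since you make the cross-level identity explicit and handle the $j_m=1$ edge case via the empty-sum convention.
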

\begin{proof}
    The child nodes of index $j$ at level $m$ are $K\cdot(j-1)+1,\cdots, K\cdot j$ at level $m+1$. According to the definition of the sum tree and the property of $j_m$, we obtain $\sum_{i=1}^{j_m-1}P_{m,i}=\sum_{i=1}^{K\cdot(j_m-1)}P_{m+1, i}< x\cdot \sum_{i=1}^{N}P(i)$. Thus, the index of the cutoff node at level $m+1$ must be $j_{m+1}\geq K\cdot(j_m-1)+1$. Noticing that $\sum_{i=1}^{j_m}P_{m,i}=\sum_{i=1}^{K\cdot(j_m)}P_{m+1, i}\geq x\cdot \sum_{i=1}^{N}P(i)$. Thus, the index of the cutoff node at level $m+1$ satisfies $j_{m+1}\leq K\cdot j_m$. Combining $K\cdot(j_m-1)\leq j_{m+1}\leq K\cdot j_m$, we obtain $j_m$ is the parent node of $j_{m+1}$.
\end{proof}
We refer such node $j_m$ as the \textit{cutoff node} at level $m$. The goal of sampling is to find the index of the cutoff node at the last level of the tree. 
According to Theorem~\ref{the:sum_tree_prefix}, the cutoff node at level $m$ is the parent of the cutoff node at level $m+1$. 
Therefore, we can start from the root node and perform the search only using the child nodes. To obtain which child node is the cutoff node, we maintain a cumulative sum of all the nodes left to the cutoff at each level. Please refer to Algorithm~\ref{alg:n_nary_sum_tree_func} for details.
We also illustrate an example of the process in Figure~\ref{fig:get_prefix_sum}, where $K=4$ and $H=6$.

\subsubsection{Data layout}
Maintaining the explicit tree data structure using pointers significantly degrades the cache performance of modern CPUs. In this work, we implement the tree data structure implicitly using an array as shown in Figure~\ref{fig:data_layout}. The sampling process requires traversing all the nodes under the same parent. To maximize the cache performance, it is desired that each group of child nodes under the same parent is cache aligned. Assume that one cacheline can store $C$ nodes, then we choose $K$, such that $K\%C=0$. We pad the root node with $K-1$ so that it is also cache aligned.

\begin{figure}
    \centering
    \includegraphics[width=\linewidth]{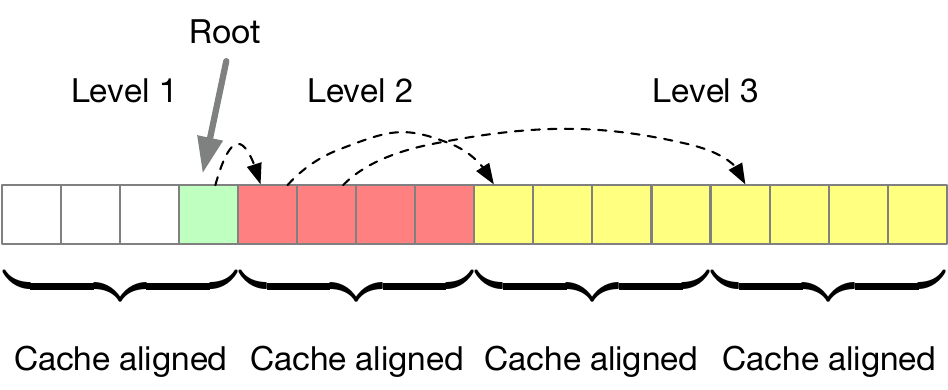}
    \caption{Data layout of the proposed sum tree with $K=4$. The black arrow shows the first child of the parent node. The color indicates the level of the node in the tree.}
    \label{fig:data_layout}
\end{figure}

\subsubsection{Theoretical performance analysis}
\paragraph{Space complexity}
The space complexity is proportional to the number of nodes in the tree. Assume the size of the Replay Buffer is $N$, which is equal to the number of nodes in the last level of the tree. Thus, the total number of nodes in the tree is: $\Theta(\frac{K^H-1}{K-1})=\Theta(\frac{K^{H-1}\cdot K-1}{K-1})=\Theta(\frac{N\cdot K-1}{K-1})=\Theta(N+\frac{N-1}{K-1})$. Clearly, as $K$ increases, the space complexity reduces due to the decrease of the number of intermediate nodes.

\paragraph{Runtime complexity}
It is clear that the priority retrieval runs in $\Theta(1)$ and priority update runs in $\Theta(\log_K N)$. For prefix sum index computation, the loops runs $H=\ceil{\log_K N} + 1$ times. The memory access inside loop has $\nicefrac{K}{C}$ cache misses and $K\cdot(1-\nicefrac{1}{C})$ cache hit, where $C$ is the number of nodes in one cacheline. Thus, the time complexity of prefix sum index computation is $\Theta((\log_K N+1)(T_{miss}\cdot\frac{K}{C}+T_{hit}\cdot K\cdot(1-\nicefrac{1}{C})))$, where $T_{miss}$ is the execution time of one cache miss and $T_{hit}$ is the execution time of one cache hit. Note that this function has a local minimum in terms of $K$. In practice, we profile the performance of various $K$ values based on the size of the cacheline and choose the one that yields the best performance.

\subsection{Thread-safe Prioritized Replay Buffer}\label{sec:thread_safe_prb}
In order to support parallel actors and learners, it is crucial to design thread-safe prioritized Replay Buffer. We summarize the resource utilization of various operations in Table~\ref{table:resource_util}. We design the thread-safe prioritized replay buffer using locking mechanism such that the duration of holding a lock is minimized. 

\subsubsection{Synchronization of the sum tree}
We use two locks to synchronize the sum tree: one to synchronize the read/write of the last level of the tree and the other to synchronize the read/write of all the levels. A detailed procedure of priority update and priority retrieval is shown in Algorithm~\ref{alg:sync_replay_buffer}. Using this technique, reading of the priority value and updating of the intermediate levels of the sum tree can be executed in parallel. Note that it will cause inconsistencies if we acquire the global\_tree\_lock after releasing the last\_level\_lock lock when two priority update queries arrive at the same time.

\subsubsection{Synchronization of insertion and sampling}
During insertion, the Replay Buffer finds an available index. Then, it writes the data to the storage and updates the priority to the maximum priority in the Replay Buffer. Compared with index searching and priority update, data writing takes more time due to explicit copy of the memory data. Thus, it is important not to hold the lock while performing the data writing. To do so, we propose \textit{lazy writing}: 1) We set the priority to zero atomically; ii) we perform data writing; iii) we reset the priority to the maximum priority in the Replay Buffer atomically. Since the priority is zero during data writing, it will never be sampled. This makes sampling only needs to synchronize prefix sum index computation. A detailed procedure is shown in Algorithm~\ref{alg:sync_replay_buffer}.

\begin{algorithm}[!t]
    \caption{Synchronization of the Prioritized Replay Buffer}
    \label{alg:sync_replay_buffer}
    \begin{algorithmic}[1]
        \Function{PriorityUpdate}{idx, new\_priority}
            \State \Call{Acquire}{global\_tree\_lock};
            \State \Call{Acquire}{last\_level\_lock};
            \State \Call{UpdateLastLevel}{\null};
            \State \Call{Release}{last\_level\_lock};
            \State \Call{UpdateIntermediateLevel}{\null};
            \State \Call{Release}{global\_tree\_lock};
        \EndFunction
        \\
        \Function{PriorityRetrieval}{idx}
            \State \Call{Acquire}{last\_level\_lock};
            \State priority = \Call{getPriority}{idx};
            \State \Call{Release}{last\_level\_lock};
            \State \Return priority;
        \EndFunction
        \\
        \Function{Insert}{idx, data}
            \State \Call{UpdatePriority}{idx, 0};
            \State \Call{WriteToStorage}{idx, data};
            \State \Call{UpdatePriority}{idx, max\_priority};
        \EndFunction
        \\
        \Function{Sample}{\null}
            \State \Call{Acquire}{global\_tree\_lock};
            \State idx, priority = \Call{getPrefixSumIndex}{\null};
            \State \Call{Release}{global\_tree\_lock};
            \State \Return idx, priority;
        \EndFunction
    \end{algorithmic}
\end{algorithm}

\begin{table}[!t]
    \centering
    \caption{Resource utilization of various operations}
    \vspace{-1em}
    \begin{tabular}{cc}
        \toprule
        Operations & Resource Utilization  \\
        \midrule
        Insertion & modify the entire tree, modify the storage \\
        Sampling & access the entire tree, access the storage \\
        Priority retrieval & access the last level of the tree \\
        Priority update & modify the entire tree \\
        \bottomrule
    \end{tabular}
    \label{table:resource_util}
\end{table}

\subsubsection{Write after read vs. read after write}
The parallelism of the priority update and the data sampling causes data dependency issues: the same data is sampling using the old priority before the new priority gets updated (write after read). Mathematically, only read after write is valid and write after read produces inconsistent results. However, it has little impact in practice as neural network training is stochastic in nature and robust to such transient inconsistencies.

\section{Overall Framework}
\begin{figure*}
    \centering
    \includegraphics[width=\linewidth]{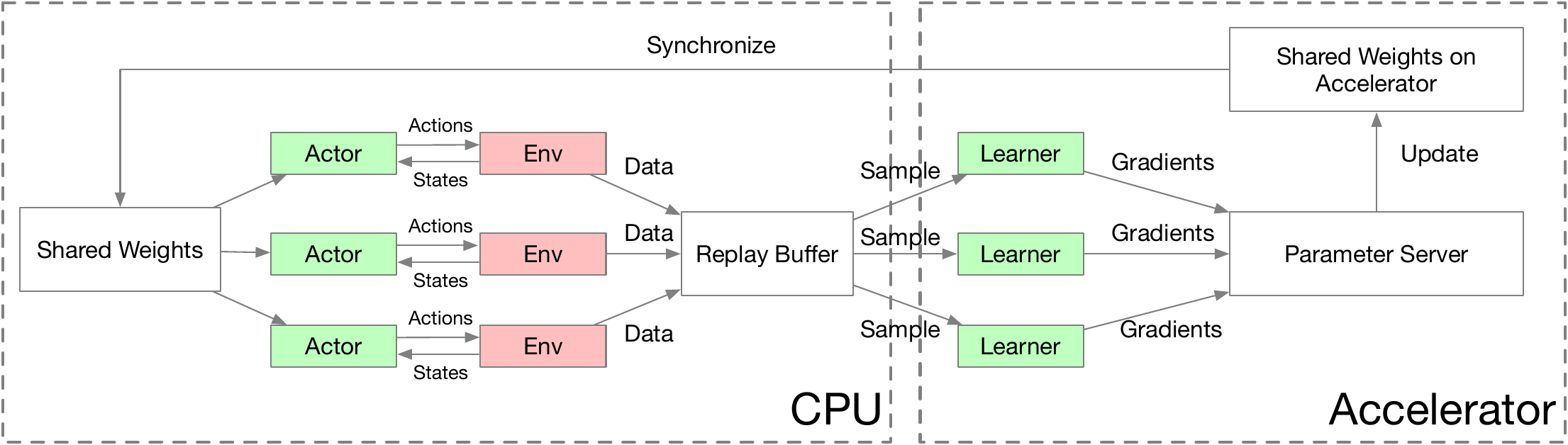}
    \caption{Overall system architecture}
    \label{fig:overall_system}
\end{figure*}

The overall system architecture is shown in Figure~\ref{fig:overall_system}. We employ parallel actors to collect data and parallel learners to compute the gradients for neural network weights update. 

\subsection{Asynchronous Actors}
Asynchronous actors collect the data simultaneously by interacting with their own instance of the environment using the shared weights. The data is then added to the Replay Buffer. It is worth noting that no synchronization is required because the inference doesn't alter the weights.

\subsection{Parallel Learners}
Deploying parallel actors increases the throughput of data collection. In order to increase the throughput of the learning, we employ parallel learners with a central parameter server \cite{parameter_server}. Each learner independently samples one batch of data from the Replay Buffer and computes the sub-gradients. The parameter server aggregates the gradients and updates the weights.

\subsection{Framework Specification}
Our framework supports a wide range of reinforcement learning algorithms including DQN \cite{dqn}, DDQN \cite{double_q_learning}, DDPG \cite{ddpg}, SAC \cite{sac}, TD3 \cite{td3} and so on. The target platform of our framework is processor + accelerator platforms, where the processor is the CPU the accelerator is either the GPU or the FPGA. The input of our framework includes:
\begin{itemize}
    \item The overall throughput of the data collection vs. the number of CPU cores.
    \item The overall throughput of the data consumption vs. the number of CPU cores.
    \item Total number of cores in the CPU.
\end{itemize}
The throughput of the data collection by a single actor is affected by i) the time of a single environment step function defined in Section~\ref{sec:mdp}; ii) The specifications of the neural networks used in the actors including the architecture (fully-connected vs. convolution networks), the size of each layer, etc. iii) the speed of the processor. The throughput of a single learner is affected by i) the reinforcement learning algorithm; ii) the optimizer iii) the speed of the accelerator.

\subsection{Design Space Exploration}\label{sec:design_space_exp}
The objective of is to choose the number of actor threads and the number of learner threads such that the ratio between the throughput of the data collection vs. data consumption is the same as the single thread implementation (update\_interval denoted in Algorithm~\ref{alg:off_policy_rl}). In order to obtain the allocation of the cores, we profile the overall throughput of the data collection vs. the number of CPU cores and denote the curve as $f_{a}(x)$, where $x$ is the number of cores. Similarly, we profile the overall throughput of the data consumption vs. the number of CPU cores and denote the curve as $f_{l}(x)$. Let the total number of CPU cores be $M$. Then, the design space exploration is the solution of equation~\ref{eq:design_space}:
\begin{align}
    \label{eq:design_space}
    f_{a}(x_a) &= \text{update\_interval}\times f_{l}(x_l) \nonumber\\
    x_a + x_l &\leq M
\end{align}
where $x_a$ and $x_l$ is the allocated number of cores for actors and learners, respectively.
If the parallel actors and/or learners are deployed on an accelerator such as GPU or FPGA, instead of CPU, profiling similar to the one described above can be used to perform the design space exploration.

\section{Experiments}
Our experiments aim to answer the following questions:
\begin{enumerate}
    \item How does our proposed Prioritized Replay Buffer compare against existing baseline approaches? (See Section~\ref{sec:comp_baselin})
    \item Does the performance of our proposed Prioritized Replay Buffer follow the theoretical analysis in Section~\ref{sec:sum_tree} in terms of the fanout size $K$? (See Section~\ref{sec:perf_prioritized})
    \item How does our proposed locking mechanisms for the prioritized replay buffer reduce the synchronization overhead compared with using a global lock? (See Section~\ref{sec:perf_prioritized})
    \item What is the performance improvement when plugging in our prioritized replay buffer implementation into existing RL frameworks? (See Section~\ref{sec:comp_existing})
\end{enumerate}

\subsection{Experimental Setup}
We conduct our experiments on 56-core Intel(R) Xeon(R) Gold 5120 CPUs with 128GB DDR4 memory and a Nvidia TITAN Xp GPU with 12GB GDDR6 memory as the accelerator. We implement the synchronization mechanism using pthreads \cite{pthread} and the training of neural networks using LibTorch \cite{pytorch}. We test our framework on reinforcement learning algorithms including DQN \cite{dqn} and DDPG \cite{ddpg}. DQN targets at discrete action space while DDPG and SAC targets at continuous action space. We use \textit{LunarLander-v2} \cite{openai_gym} environment to test the algorithms. In all our experiments, the desired ratio between the throughput of the data collection and the data learning (update\_interval) is set to 1.

\subsection{Baseline Approach}
In this work, we use RLlib \cite{ray_rllib} as our baseline. RLlib is an open source implementation of parallel and distributed framework for training reinforcement learning agents written in Python \cite{python}. For fair comparison, we use the same amount of cores when running the experiments. We also compare with the Prioritized Replay Buffer implementation in open source RL framework tianshou \cite{tianshou}.

\subsection{Comparison with Baseline Approaches}\label{sec:comp_baselin}
\begin{figure}
    \centering
    \includegraphics[width=\linewidth]{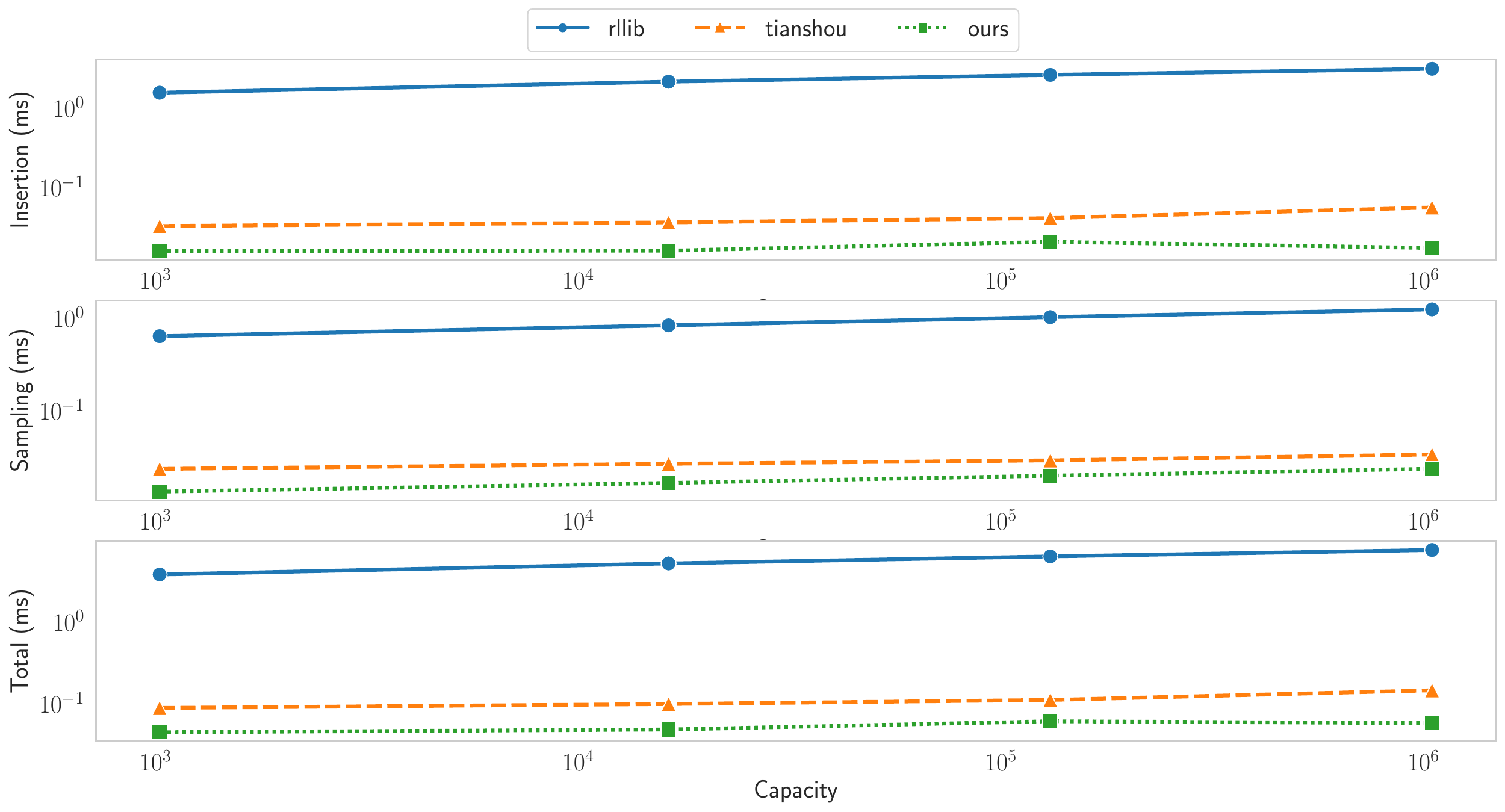}
    \caption{Comparison with the baseline approach}
    \label{fig:baseline_comp}
\end{figure}

We show the latency of insertion and sampling of the Replay Buffer with various sizes in Figure~\ref{fig:baseline_comp}. We compare our $K$-ary sum tree based implementation with RLlib \cite{ray_rllib} and tianshou \cite{tianshou}. Overall, our approach reduces the total latency by around 4x compared with tianshou \cite{tianshou} and around 100x compared with RLlib \cite{ray_rllib}. Note that the latency of the Prioritized Replay Buffer operations in  RLlib increases in linear while the latency of our implementation increases in sub-linear. This suggests our $K$-ary based Prioritized Replay Buffer has better scalability compared with \cite{ray_rllib}.

\subsection{Performance of the Prioritized Replay Buffer}\label{sec:perf_prioritized}
\begin{figure}
    \centering
    \includegraphics[width=\linewidth]{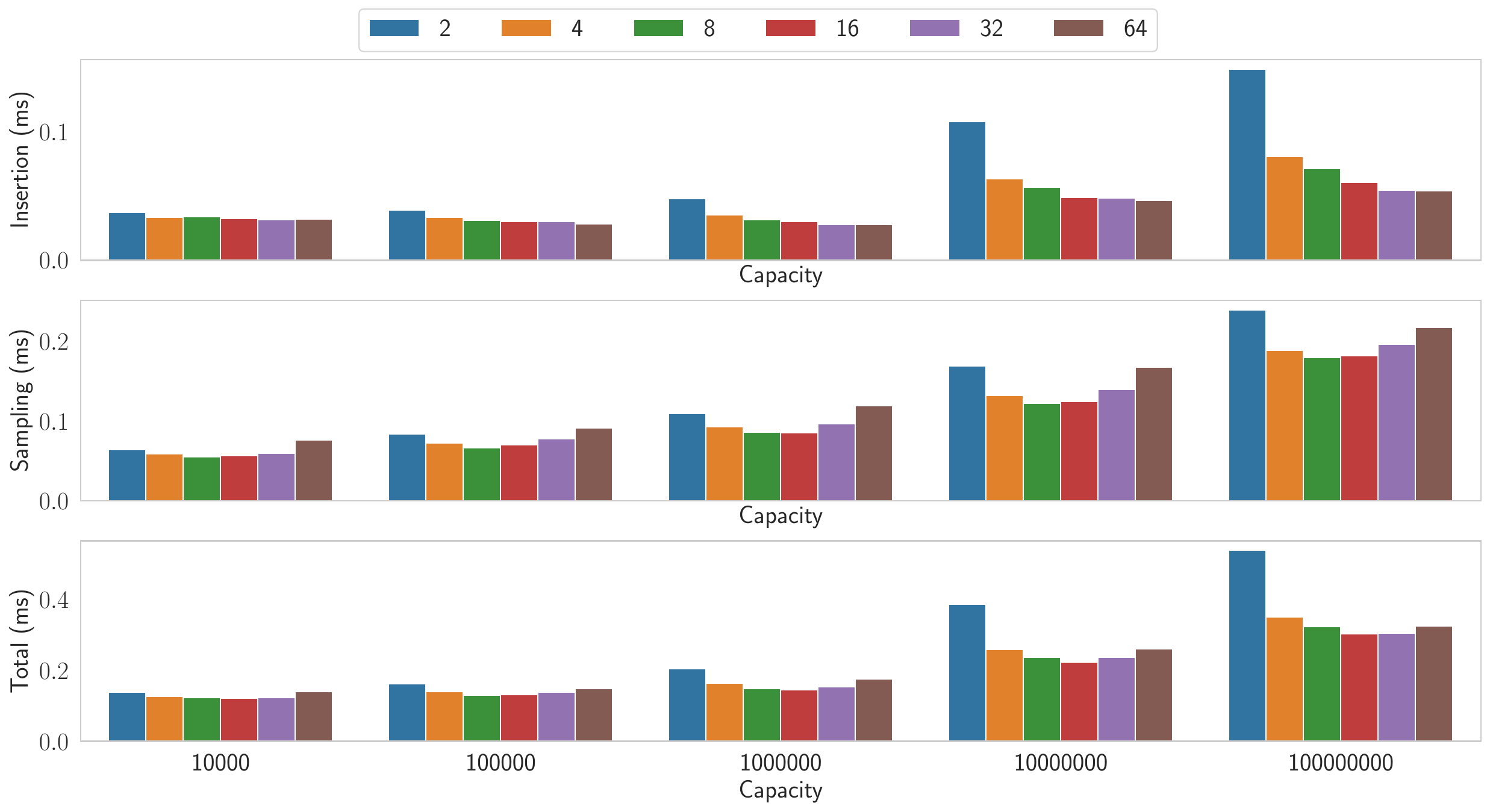}
    \caption{Latency of various Prioritized Replay Buffer operations with various fannout $K$}
    \label{fig:k_ary_sum_tree}
\end{figure}
\subsubsection{Effect of fanout $K$}
In order to answer question 2, we show the latency of insertion and sampling of various $K$ in Figure~\ref{fig:k_ary_sum_tree}. We also vary the capacity of the Replay Buffer to demonstrate the scalability. First, we observe that the latency of insertion decreases when $K$ increases. This matches our theoretical performance analysis because the latency is proportion to the height of the tree. The height of the tree decreases when $K$ increases. Second, we observe that the latency of sampling first decreases to a local minimum and then increases as $K$ increases. This also matches with our theoretical analysis because as $K$ increases, the latency increase of search over each level starts to dominate the latency decrease with fewer number of levels. In order to choose the optimal $K$, we simply perform profiling of insertion and sampling to obtain the total latency. In our experimental machine, $K=16$ yields the best result.

\subsubsection{Effect of synchronization optimization}

\begin{figure}
    \centering
    \includegraphics[width=\linewidth]{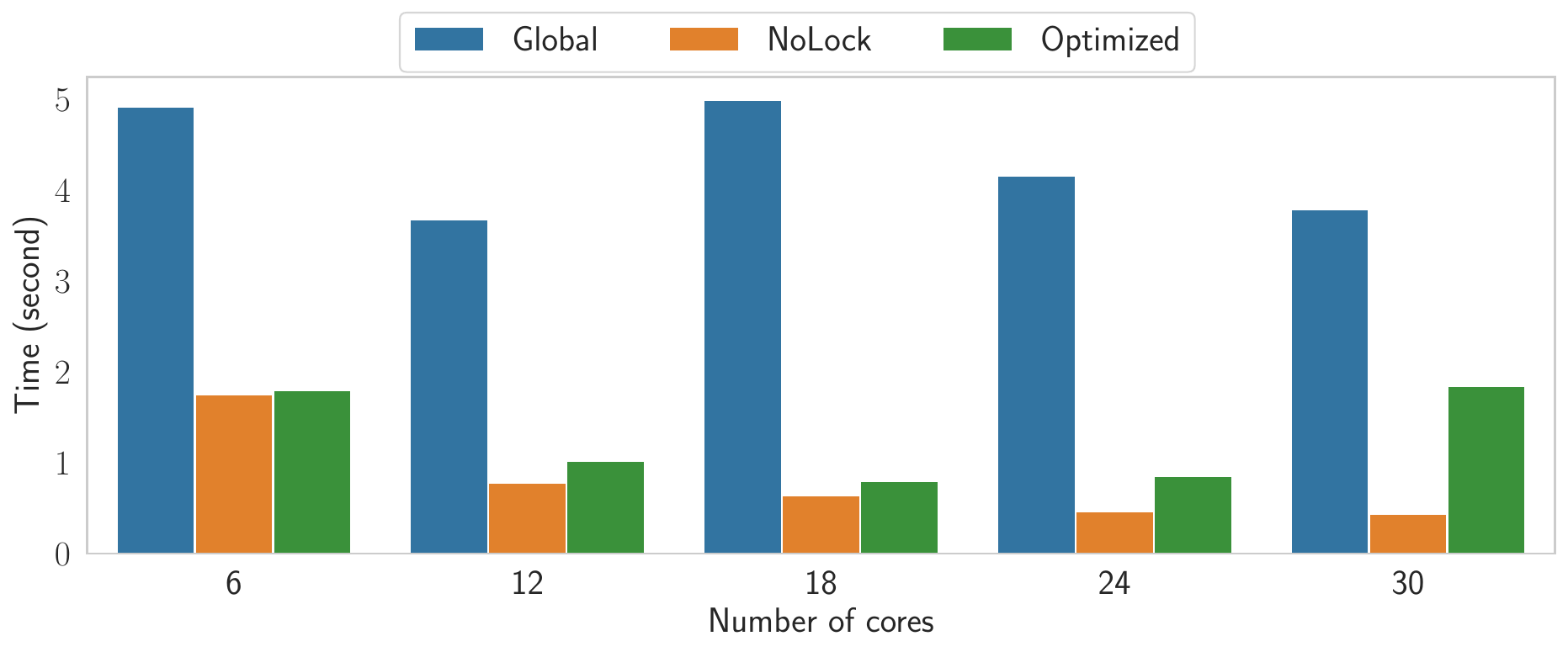}
    \caption{Execution time of the Prioritized Replay Buffer with various synchronization methods}
    \label{fig:scalability}
\end{figure}
In order to answer question 3, we show the execution time of 5000 iterations versus the number of CPU cores using a global lock, no lock and our proposed synchronization optimization in Section~\ref{sec:thread_safe_prb}. Although the results of computations without using lock are wrong, it provides an upper bound on the performance. We observe that our proposed thread-level synchronization enables 1.01x$\sim$5x increase of the execution time compared with the minimum achievable execution time; and achieves 2x$\sim$5x improvement against using a global lock. Moreover, our design scales well in the number of CPU cores.

\subsection{Performance improvement of existing frameworks using our proposed replay buffer}\label{sec:comp_existing}
In order to show the superiority of our proposed Prioritized Replay Buffer, we write a Python binding of the C++ implementation and plug it into existing open source RL framework RLlib \cite{ray_rllib}. We show the latency of each training step of two RL algorithms in Figure~\ref{fig:library_perf}. 
Overall, we achieve 1.19x$\sim$ 1.75x performance improvement using various CPU cores.
The speedup decreases as the number of CPU cores increases. This is because the proportion of the replay buffer operations time decreases for each core and the bottleneck shifts to training the neural networks.

\begin{figure}
    \centering
    \includegraphics[width=\linewidth]{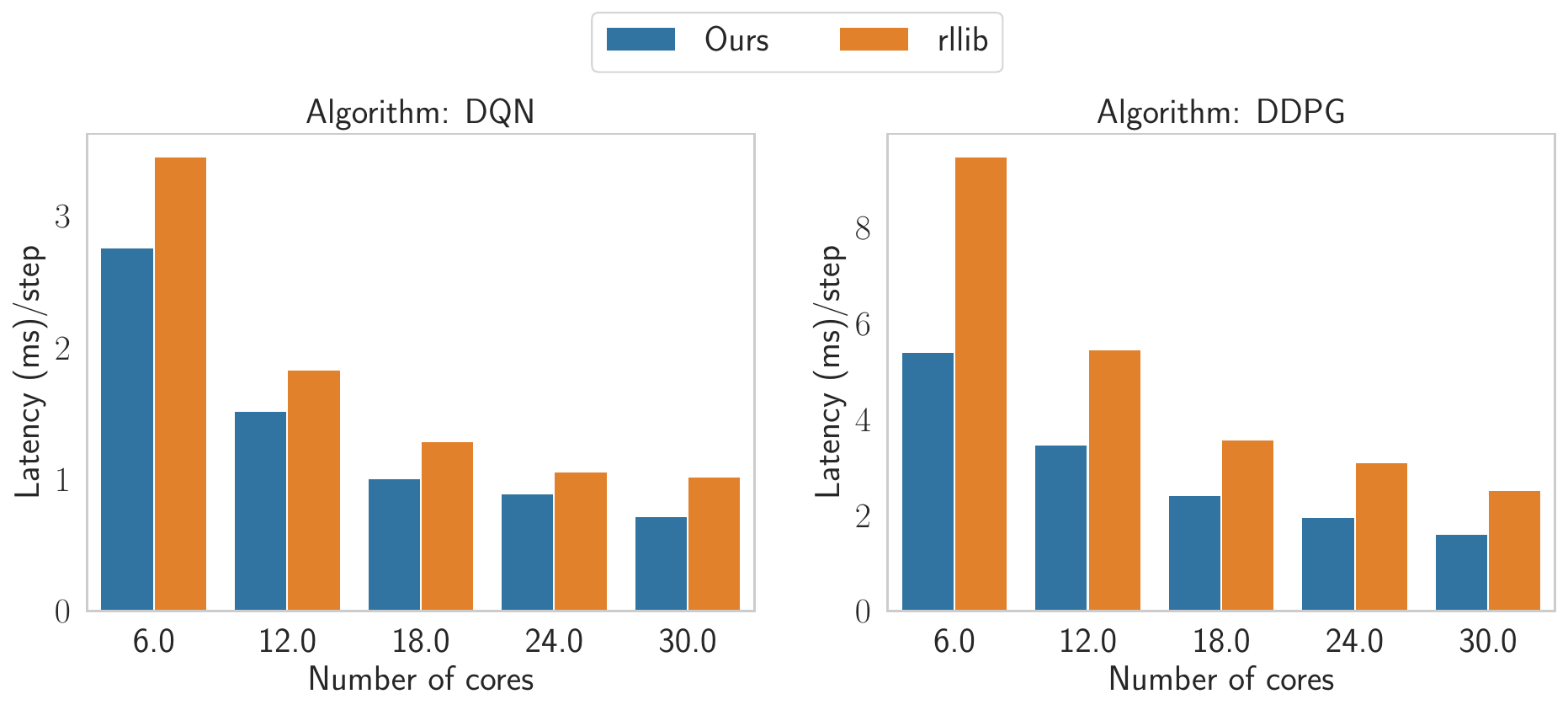}
    \caption{Overall speedup by plugging our prioritized replay buffer implementation into existing open source reinforcement learning libraries.}
    \label{fig:library_perf}
\end{figure}


\subsection{Design Space Exploration}

\begin{figure}
    \centering
    \includegraphics[width=\linewidth]{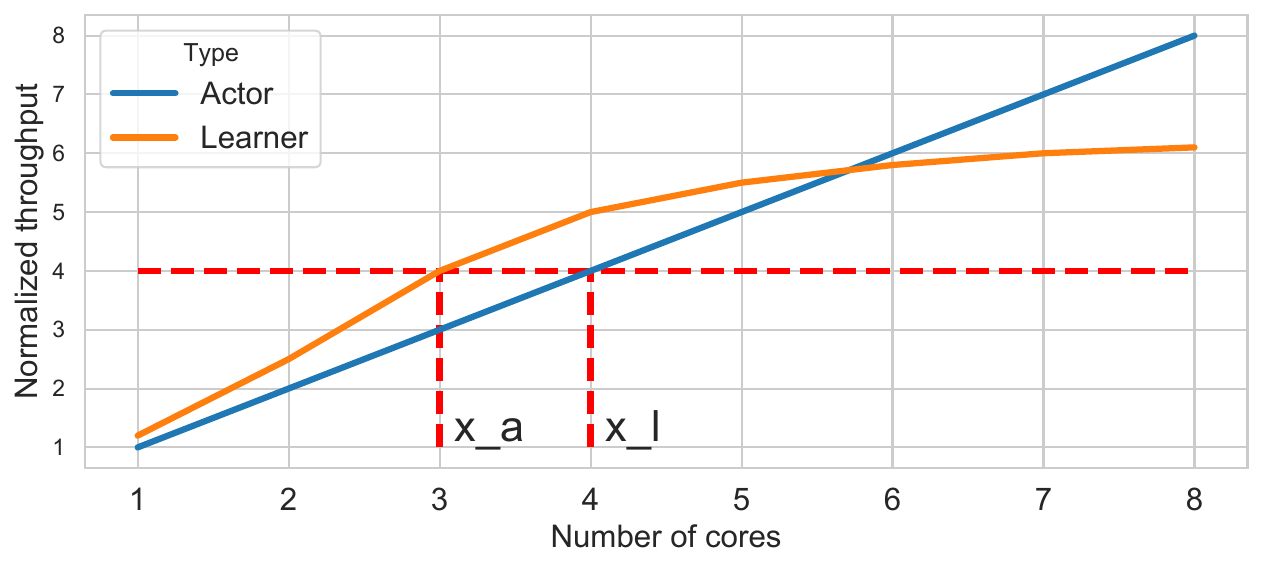}
    \caption{Illustration of design space exploration}
    \label{fig:design_space}
\end{figure}

As discussed in Section~\ref{sec:design_space_exp}, the objective is to allocate the number of cores for actors and learners, respectively such that the desired throughput ratio of the data collection and the data consumption is met. Our framework will first profile the throughput curve of actors and learners. We show an example in Figure~\ref{fig:design_space}, where the desired throughput ratio is 1. Then, we perform exhaustive search to find the solution $x_a$ and $x_b$ of Equation~\ref{eq:design_space}. The time complexity of the exhaustive search is $O(M^2)$, where $M$ is the total number of cores in the processor.

\subsection{Impact of the Data Layout}
The total size of the sum tree used in a typical replay buffer of size 1 million is less 10 KB. This makes the whole sum tree fit into the L2 cache of the modern CPUs. Thus, we only observe around $1\%$ benefit of our proposed cache aligned data layout. However, as the increase of the replay buffer size on larger problems, the superiority of our proposed data layout will appear.

\section{Conclusions and Future Work}
In this work, we propose a framework for generating scalable RL implementations on processor with accelerator platforms. We propose to use parallel actors and learners to increase the throughput of the data collection and the data consumption. To support asynchronous actors and learners, we propose a Prioritized Replay Buffer based on $K$-ary sum tree data structure. We propose \textit{lazy writing} locking mechanism to minimize the synchronization effort. Our experiments demonstrate that our proposed framework is superior to baseline approaches. Given hardware resources, our framework can automatically generate the number of actor threads and learner threads such that the desired ratio between data collection and data consumption is met. Future work includes implementation of the learners on various accelerator types including GPU clusters and FPGAs.

\bibliographystyle{IEEEtran}
\bibliography{bib/chi_bib}

\end{document}